\title{Incremental Clustering: The Case for Extra Clusters}
\author{
Margareta Ackerman\\
UC San Diego\\
 9500 Gilman Dr, La Jolla, California 92093\\
\texttt{maackerman@ucsd.edu} \\
\And
Sanjoy Dasgupta\\
UC San Diego\\
 9500 Gilman Dr, La Jolla, California 92093\\
\texttt{dasgupta@eng.ucsd.edu} \\
}
\newtheorem{thm}{Theorem}[section]
\newtheorem{definition}[thm]{Definition}
\newtheorem{lemma}[thm]{Lemma}
\newtheorem{observ}[thm]{Observation}
\newtheorem{alg}[thm]{Algorithm}
\def\X{{\mathcal{X}}}
\def\C{{\mathcal{C}}}
\def\R{{\mathbb{R}}}
\begin{document}

\maketitle

\begin{abstract} 
The explosion in the amount of data available for analysis often necessitates a transition from batch to {\it incremental} clustering methods, which process one element at a time and typically store only a small subset of the data. 
In this paper, we initiate the formal analysis of incremental clustering methods focusing on the types of cluster structure that they are able to detect. We find that the incremental setting is strictly weaker than the batch model, proving that a fundamental class of cluster structures that can readily be detected in the batch setting is impossible to identify using any incremental method. Furthermore, we show how the limitations of incremental clustering can be overcome by allowing additional clusters.
\end{abstract} 

\section{Introduction}

Clustering is a fundamental form of data analysis that is applied in a wide variety of domains, from astronomy to zoology. With the radical increase in the amount of data collected in recent years, the use of clustering has expanded even further, to applications such as personalization and targeted advertising. 
Clustering is now a core component of interactive systems that collect information on millions of users on a daily basis. It is becoming impractical to store all relevant information in memory at the same time, often necessitating the transition to {\it incremental} methods.

Incremental methods receive data elements one at a time and typically use much less space than is needed to store the complete data set. This presents a particularly interesting challenge for unsupervised learning, which unlike its supervised counterpart, also suffers from an absence of a unique target truth. Observe that not all data possesses a meaningful clustering, and when an inherent structure exists, it need not be unique (see Figure~\ref{fig:different} for an example). As such, different users may be interested in very different partitions. Consequently, different clustering methods detect distinct types of structure, often yielding radically different results on the same data. Until now, differences in the input-output behaviour of  clustering methods have only been studied  in the batch setting \cite{Jardine,Kleinberg,BBV,NIPS2010, COLT2010, oligarchies, ackerman2011weighted, zadeh2009uniqueness}. In this work, we take a first look at the types of cluster structures that can be discovered by incremental clustering methods. 

To qualify the type of cluster structure present in data, a number of notions of {\it clusterability} have been proposed (for a detailed discussion, see ~\cite{ackerman2009clusterability} and ~\cite{BBV}). These notions capture the structure of the \emph{target clustering}: the clustering desired by the user for a specific application. As such, notions of clusterability facilitate the analysis of clustering methods by making it possible to formally ascertain whether an algorithm correctly recovers the desired partition.

\begin{figure}
\begin{center}
\includegraphics[scale=0.26]{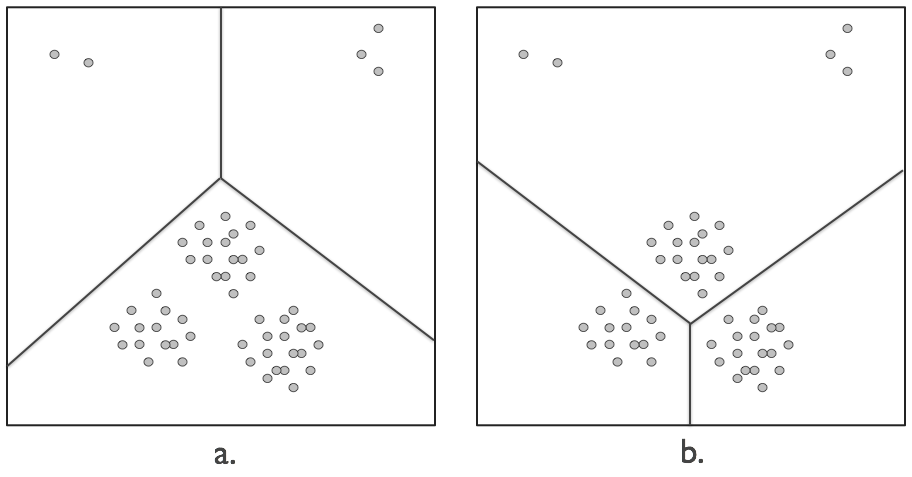} 
\end{center}
\caption{An example of different cluster structures in the same data. The clustering on the left finds inherent structure in the data by identifying well-separated partitions, while the clustering on the right discovers structure in the data by focusing on the dense region. The correct partitioning depends on the application at hand.}
\label{fig:different}
\end{figure}

One elegant notion of clusterability, introduced by Balcan et al.~\cite{BBV}, requires that every element be closer to data in its own cluster than to other points. For simplicity, we will refer to clusterings that adhere to this requirement as {\it nice}. It was shown by \cite{BBV} that such clusterings are readily detected offline by classical batch algorithms. On the other hand, we prove (Theorem~\ref{thm:main-lower}) that no incremental method can discover these partitions. Thus, batch algorithms are significantly stronger than incremental methods in their ability to detect cluster structure. 

In an effort to identify types of cluster structure that incremental methods can recover, we turn to stricter notions of clusterability. A notion used by Epter et al.~\cite{Epter} requires that the minimum separation between clusters be larger than the maximum cluster diameter. We call such clusterings {\it perfect}, and we present an incremental method that is able to recover them (Theorem~\ref{thm:seq-nn}).

Yet, this result alone is unsatisfactory. If, indeed, it were necessary to resort to such strict notions of clusterability, then incremental methods would have limited utility. Is there some other way to circumvent the limitations of incremental techniques? 

It turns out that incremental methods become a lot more powerful when we slightly alter the clustering problem: if, instead of asking for exactly the target partition, we are satisfied with a {\it refinement}, that is, a partition each of whose clusters is contained within some target cluster. Indeed, in many applications, it is reasonable to allow additional clusters. 

Incremental methods benefit from additional clusters in several ways. First, we exhibit an algorithm that is able to capture nice $k$-clusterings if it is allowed to return a refinement with $2^{k-1}$ clusters (Theorem~\ref{thm:lots-of-centers}), which could be reasonable for small $k$. We also show that this exponential dependence on $k$ is unavoidable in general (Theorem~\ref{thm:lower-bound-extra-centers}). As such, allowing additional clusters enables incremental techniques to overcome their inability to detect nice partitions. 

A similar phenomenon is observed in the analysis of the sequential $k$-means algorithm, one of the most popular methods of incremental clustering. We show that it is unable to detect perfect clusterings (Theorem~\ref{thm:seq-kmeans-badcase}), but that if each cluster contains a significant fraction of the data, then it can recover a refinement of (a slight variant of) nice clusterings (Theorem~\ref{thm:seq-kmeans-positive}). 

Lastly, we demonstrate the power of additional clusters by relaxing the niceness condition, requiring only that clusters have a significant \emph{core} (defined in Section~\ref{sec:core}). Under this milder requirement, we show that a randomized incremental method is able to discover a refinement of the target partition (Theorem~\ref{thm:subsample}).

Due to space limitations, many proofs appear in the supplementary material.

\section{Definitions}
\label{sec:definitions}

We consider a space $\mathcal{X}$ equipped with a symmetric distance function $d: \mathcal{X} \times \mathcal{X} \rightarrow \mathbb{R}^+$ satisfying $d(x,x) = 0$. An example is $\mathcal{X} = \mathbb{R}^p$ with $d(x,x') = \|x - x'\|_2$. It is assumed that a clustering algorithm can invoke $d(\cdot, \cdot)$ on any pair $x,x' \in \mathcal{X}$. 

A \emph{clustering} (or, {\it partition}) of $\mathcal{X}$ is a set of clusters $\mathcal{C} = \{C_1, \ldots, C_k\}$ such that $C_i \cap C_j = \emptyset$ for all $i \neq j$, and   $\mathcal{X}  =\cup_{i=1}^k C_i$. A \emph{$k$-clustering} is a clustering with $k$ clusters.

Write $x \sim_{\mathcal{C}} y$ if $x,y$ are both in some cluster $C_j$; and $x \not\sim_{\mathcal{C}} y$ otherwise. This is an equivalence relation. 

\begin{definition}
An \emph{\bf incremental clustering algorithm} has the following structure:
\begin{tabbing}
for \= $n = 1,\ldots, N$:\\
\>	See data point $x_n \in \mathcal{X}$\\
\>	Select model $M_n \in \mathcal{M}$
\end{tabbing} 
where $N$ might be $\infty$, and $\mathcal{M}$ is a collection of clusterings of $\X$. We require the algorithm to have bounded memory, typically a function of the number of clusters. As a result, an incremental algorithm cannot store all data points.
\end{definition}

Notice that the ordering of the points is unspecified. In our results, we consider two types of ordering: \emph{arbitrary ordering}, which is the standard setting in online learning and allows points to be ordered by an adversary, and \emph{random ordering}, which is standard in statistical learning theory. 

In {\it exemplar-based clustering}, $\mathcal{M} = \mathcal{X}^k$: each model is a list of $k$ ``centers'' $(t_1, \ldots, t_k)$ that {\it induce} a clustering of $\mathcal{X}$, where every $x \in \mathcal{X}$ is assigned to the cluster $C_i$ for which $d(x, t_i)$ is smallest (breaking ties by picking the smallest $i$). All the clusterings we will consider in this paper will be specified in this manner. 

\subsection{Examples of incremental clustering algorithms}\label{examples_of_algorithms}

The most well-known incremental clustering algorithm is probably \emph{sequential $k$-means}, which is meant for data in Euclidean space. It is an incremental variant of Lloyd's algorithm~\cite{L82,MacQ67}:
 
\begin{alg} Sequential $k$-means.
\begin{tabbing}
Set $T = (t_1, \ldots, t_k)$ to the first $k$ data points \\
Initialize the counts $n_1, n_2, ..., n_k$ to $1$\\
Repeat:\\
\ \ \ \ \ \= Acquire the next example, $x$\\
      \> If $t_i$ is the closest center to $x$:\\
      \> \ \ \ \ \ \= Increment $n_i$\\
      \>       \> Replace $t_i$ by $t_i + (1/n_i)( x - t_i)$
\end{tabbing}
\label{alg:online-kmeans}
\end{alg}

This method, and many variants of it, have been studied intensively in the literature on self-organizing maps~\cite{K01}. It attempts to find centers $T$ that optimize the $k$-means cost function:
$$ \mbox{cost}(T) = \sum_{\mbox{\rm \scriptsize data $x$}} \min_{t \in T} \| x- t\|^2 .$$
It is not hard to see that the solution obtained by sequential $k$-means at any given time can have cost far from optimal; we will see an even stronger lower bound in Theorem~\ref{thm:seq-kmeans-badcase}. Nonetheless, we will also see that if additional centers are allowed, this algorithm is able to correctly capture some fundamental types of cluster structure.

Another family of clustering algorithms with incremental variants are {\it agglomerative} procedures~\cite{Jardine} like single-linkage~\cite{H81}. Given $n$ data points in batch mode, these algorithms produce a hierarchical clustering on all $n$ points. But the hierarchy can be truncated at the intermediate $k$-clustering, yielding a tree with $k$ leaves. Moreover, there is a natural scheme for updating these leaves incrementally:

\begin{alg} Sequential agglomerative clustering.
\begin{tabbing}
Set $T$ to the first $k$ data points \\
Repeat: \\
\ \ \ \ \ \= Get the next point $x$ and add it to $T$ \\
      \> Select $t,t' \in T$ for which $\mbox{\tt dist}(t,t')$ is smallest \\
      \> Replace $t,t'$ by the single center $\mbox{\tt merge}(t,t')$
\end{tabbing}
\label{alg:seq-agglom}
\end{alg}
 
Here the two functions {\tt dist} and {\tt merge} can be varied to optimize different clustering criteria, and often require storing additional sufficient statistics, such as counts of individual clusters. For instance, Ward's method of average linkage~\cite{W63} is geared towards the $k$-means cost function. We will consider the variant obtained by setting $\mbox{\tt dist}(t,t') = d(t,t')$ and $\mbox{\tt merge}(t,t')$ to either $t$ or $t'$:

\begin{alg} Sequential nearest-neighbour clustering.
\begin{tabbing}
Set $T$ to the first $k$ data points \\
Repeat: \\
\ \ \ \ \ \= Get the next point $x$ and add it to $T$ \\
      \> Let $t,t'$ be the two closest points in $T$ \\
      \> Replace $t,t'$ by either of these two points
\end{tabbing}
\label{alg:seq-nearest-neighbour}
\end{alg}
We will see that this algorithm is effective at picking out a large class of cluster structures.

\subsection{The target clustering}

Unlike supervised learning tasks, which are typically endowed with a unique correct classification, clustering is ambiguous. One approach to disambiguating clustering is identifying an objective function such as $k$-means, and then defining the clustering task as finding the partition with minimum cost. Although there are situations to which this approach is well-suited, many clustering applications do not inherently lend themselves to any specific objective function. As such, while objective functions play an essential role in deriving clustering methods, they do not circumvent the ambiguous nature of clustering.  

The term \emph{target clustering} denotes the partition that a specific user is looking for in a data set. This notion was used by Balcan et al.~\cite{BBV} to study what constraints on cluster structure make them efficiently identifiable in a batch setting. In this paper, we consider families of target clusterings that satisfy different properties, and ask whether incremental algorithms can identify such clusterings.

The target clustering $\mathcal{C}$ is defined on a possibly infinite space $\X$, from which the learner receives a sequence of points. At any time $n$, the learner has seen $n$ data points and has some clustering that ideally agrees with $\mathcal{C}$ on these points. The methods we consider are {\it exemplar-based}: they all specify a list of points $T$ in $\X$ that induce a clustering of $\X$ (recall the discussion just before Section~\ref{examples_of_algorithms}). We consider two requirements:
\begin{itemize}
\item (Strong) $T$ induces the target clustering $\mathcal{C}$.
\item (Weaker) $T$ induces a refinement of the target clustering $\mathcal{C}$: that is, each cluster induced by $T$ is part of some cluster of $\mathcal{C}$.
\end{itemize}
If the learning algorithm is run on a finite data set, then we require these conditions to hold once all points have been seen. In our positive results, we will also consider infinite streams of data, and show that these conditions hold at every time $n$, taking the target clustering restricted to the points seen so far.

\section{A basic limitation of incremental clustering}
\label{sec:nice}

We begin by studying limitations of incremental clustering  compared with the batch setting.

One of the most fundamental types of cluster structure is what we shall call {\it nice clusterings} for the sake of brevity. Originally introduced by Balcan et al.~\cite{BBV} under the name ``strict separation,'' this notion has since been applied in \cite{ackerman2011weighted}, \cite{ackerman2009clusterability}, and \cite{balcan2010robust}, to name a few. 

\begin{definition}[Nice clustering]
A clustering  $\mathcal{C}$ of  $(\mathcal{X},d)$ is \emph{\bf nice} if for all $x,y,z \in \mathcal{X}$, $d(y,x) < d(z,x)$ whenever $x \sim_{\mathcal{C}} y$ and $x \not\sim_{\mathcal{C}} z$. 
\end{definition}
See Figure~\ref{fig:nice} for an example.

\begin{figure}
\begin{center}
\includegraphics[scale=0.35]{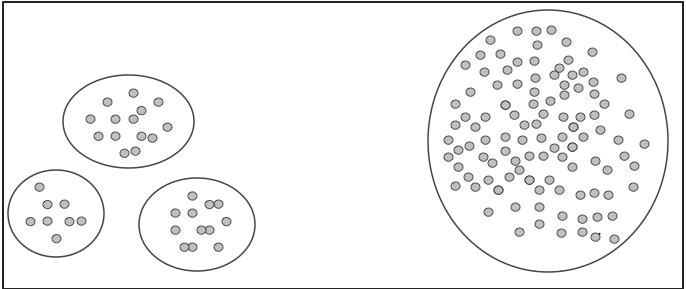}
\end{center}
\caption{A nice clustering may include clusters with very different diameters, as long as the distance between any two clusters scales as the larger diameter of the two.}
\label{fig:nice}
\end{figure}

\begin{observ}
If we select one point from every cluster of a nice clustering $\mathcal{C}$, the resulting set induces $\mathcal{C}$. (Moreover, niceness is the minimal property under which this holds.)
\end{observ}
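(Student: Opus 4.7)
The plan is to prove both directions of the observation: sufficiency (nice clusterings have the property that any one-per-cluster selection of representatives induces the clustering) and necessity (this property forces niceness).

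For the sufficiency direction, let $\mathcal{C} = \{C_1, \ldots, C_k\}$ be nice and fix any choice of representatives $t_i \in C_i$. I need to show that for every $x \in C_i$ and every $j \neq i$, $x$ is strictly closer to $t_i$ than to $t_j$, so the induced clustering coincides with $\mathcal{C}$. Since $x \sim_\mathcal{C} t_i$ and $x \not\sim_\mathcal{C} t_j$, niceness applied with $y = t_i$ and $z = t_j$ gives exactly $d(t_i,x) < d(t_j,x)$. The only mildly delicate case is $x = t_i$, where $d(t_i,x)=0$ and the inequality is immediate from $d(t_j,x) > 0$ (which itself follows from niceness applied to any point in $C_i$, or by taking $y = x$ in the definition). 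This handles the first sentence.

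For the minimality claim, I would argue the contrapositive: if $\mathcal{C}$ is not nice, exhibit a choice of representatives whose induced partition differs from $\mathcal{C}$. Non-niceness means there exist $x,y,z$ with $x \sim_\mathcal{C} y$, $x \not\sim_\mathcal{C} z$, yet $d(y,x) \ge d(z,x)$. Take the representative of $x$'s cluster to be $y$ and the representative of $z$'s cluster to be $z$ (choosing arbitrary representatives from the remaining clusters). Then $x$ is at least as close to $z$ as to its own representative $y$; so either $x$ is strictly misassigned (if the inequality is strict) or, at worst, tied, in which case by re-indexing the clusters so that $z$'s cluster has a smaller index than $x$'s (recall ties are broken by smallest index), $x$ is assigned to $z$'s cluster. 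In either case the induced partition fails to equal $\mathcal{C}$.

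The main obstacle is really the equality case $d(y,x) = d(z,x)$ in the converse, since niceness is defined with a strict inequality and the induced clustering uses a tie-breaking rule. The clean way to handle this is the re-indexing trick above, which exploits the freedom in labeling clusters; alternatively, one could phrase minimality in terms of \emph{some} tie-breaking rule rather than a fixed one. Either phrasing delivers the intended meaning that niceness is the weakest condition forcing every representative-selection to recover $\mathcal{C}$. The rest of the argument is essentially a direct unpacking of the definitions.
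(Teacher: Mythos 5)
Your proof is correct; the paper states this observation without proof, treating it as an immediate consequence of the definitions, and your argument is exactly the direct unpacking it has in mind (niceness applied with $y = t_i$, $z = t_j$ for sufficiency, and the contrapositive with $y$ and $z$ chosen as representatives for minimality). Your careful handling of the equality/tie-breaking case in the minimality direction goes slightly beyond what the paper's parenthetical remark demands, but it is sound and correctly exploits the freedom in ordering the list of centers.
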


A nice $k$-clustering is not, in general, unique. For example, consider  $\mathcal{X} =  \{1,2,4,5\}$ on the real line under the usual distance metric; then both $\{\{1\},\{2\}, \{4,5\}\}$ and $\{\{1,2\},\{4\},\{5\}\}$ are nice 3-clusterings of $\mathcal{X}$. Thus we start by considering data with a {\it unique} nice $k$-clustering.

Since niceness is a strong requirement, we might expect that it is easy to detect. Indeed, in the batch setting, a unique nice $k$-clustering can be recovered by single-linkage~\cite{BBV}. However, we show that nice partitions cannot be detected in the incremental setting, even if they are unique.

We start by formalizing the ordering of the data. An \emph{ordering function} $O$ takes a finite set $\mathcal{X}$ and returns an ordering of the points in this set. An \emph{ordered distance space} is denoted by $(O[\mathcal{X}],d)$. 
 
\begin{definition}
An incremental clustering algorithm $\mathcal{A}$ is \textbf{\emph{nice-detecting}} if, given a positive integer $k$ and $(\mathcal{X},d)$ that has a unique nice $k$-clustering $\mathcal{C}$, the procedure $\mathcal{A}(O[\mathcal{X}],d,k)$ outputs $\mathcal{C}$ for any ordering function $O$. 
\end{definition}

In this section, we show (Theorem~\ref{thm:main-lower}) that no deterministic memory-bounded incremental method is nice-detecting, even for points in Euclidean space under the $\ell_2$ metric.

We start with the intuition behind the proof. Fix any incremental clustering algorithm and set the number of clusters to $3$. We will specify a data set $D$ with a unique nice $3$-clustering that this algorithm cannot detect.  The data set has two subsets, $D_1$ and $D_2$, that are far away from each other but are otherwise nearly isomorphic. The target 3-clustering is either: ($D_1$, together with a 2-clustering of $D_2$) or ($D_2$, together with a 2-clustering of $D_1$). 

The central piece of the construction is the configuration of $D_1$ (and likewise, $D_2$). The first point presented to the learner is $x_o$. This is followed by a clique of points $x_i$ that are equidistant from each other and have the same, slightly larger, distance to $x_o$. For instance, we could set distances within the clique $d(x_i, x_j)$ to 1, and distances $d(x_i, x_o)$ to 2. Finally there is a point $x'$ that is {\it either} exactly like one of the $x_i$'s (same distances), {\it or} differs from them in just one specific distance $d(x', x_j)$ which is set to 2. In the former case, there is a nice 2-clustering of $D_1$, in which one cluster is $x_o$ and the other cluster is everything else. In the latter case, there is no nice 2-clustering, just the 1-clustering consisting of all of $D_1$.

$D_2$ is like $D_1$, but is rigged so that if $D_1$ has a nice 2-clustering, then $D_2$ does not; and vice versa.

The two possibilities for $D_1$ are almost identical, and it would seem that the only way an algorithm can distinguish between them is by remembering all the points it has seen. A memory-bounded incremental learner does not have this luxury. Formalizing this argument requires some care; we cannot, for instance, assume that the learner is using its memory to store individual points.

In order to specify $D_1$, we start with a larger collection of points that we call an {\it $M$-configuration}, and that is independent of any algorithm. We then pick two possibilities for $D_1$ (one with a nice $2$-clustering and one without) from this collection, based on the specific learner.

\begin{definition}
In any metric space $(\X, d)$, for any integer $M > 0$, define an \emph{$M$-configuration} to be a collection of $2M+1$ points $x_o, x_1, \ldots, x_M, x_1', \ldots, x_M' \in \X$ such that
\begin{itemize}
\item All interpoint distances are in the range $[1,2]$.
\item $d(x_o, x_i), d(x_o, x_i') \in (3/2,2]$ for all $i \geq 1$.
\item $d(x_i, x_j), d(x_i', x_j'), d(x_i, x_j') \in [1,3/2]$ for all $i \neq j \geq 1$.
\item $d(x_i, x_i') > d(x_o, x_i)$.
\end{itemize}
\label{defn:configuration}
\end{definition}
The significance of this point configuration is as follows.
\begin{lemma}
Let $x_o, x_1, \ldots, x_M, x_1', \ldots, x_M'$ be any $M$-configuration in $(\X,d)$. Pick any index $1 \leq j \leq M$ and any subset $S \subset [M]$ with $|S| > 1$. Then the set $A = \{x_o, x_j'\} \cup \{x_i: i \in S\}$ has a nice 2-clustering if and only if $j \not\in S$.
\label{lemma:configuration}
\end{lemma}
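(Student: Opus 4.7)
The plan is a two-way case analysis that exploits the sharp $3/2$ threshold built into any $M$-configuration: by properties 2 and 4, every edge incident to $x_o$ and every matched edge $(x_i, x_i')$ has length strictly above $3/2$, while property 3 caps all other pairwise distances by $3/2$. Inside $A$ the only ``far'' edges available are those incident to $x_o$, together with the single edge $(x_j, x_j')$ when $j \in S$.

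For the ``if'' direction, assume $j \not\in S$ and I will exhibit the partition $C_1 = \{x_o\}$, $C_2 = \{x_j'\} \cup \{x_i : i \in S\}$. Every point of $C_2$ has distance $>3/2$ to $x_o$ by property 2, while since $j \not\in S$ every within-$C_2$ pair is either of the form $(x_i, x_{i'})$ with $i \neq i'$ or $(x_i, x_j')$ with $i \neq j$, so property 3 bounds its distance by $3/2$. Niceness is immediate at every $x \in C_2$ and vacuous at $x_o$.

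For the ``only if'' direction, assume $j \in S$ and let $(C_1, C_2)$ be any nice $2$-clustering with $x_o \in C_1$ (WLOG). The central observation is that for every $i \in S \setminus \{j\}$ (which exists because $|S| > 1$), $x_i$ must lie in $C_2$: otherwise the edge $d(x_o, x_i) > 3/2$ from $x_i$ to its same-cluster neighbour $x_o$ would exceed the distance $\leq 3/2$ from $x_i$ to some other-cluster point---and such a point exists because $C_2$ is non-empty and $A$ contains no other far-neighbours of $x_i$ (note $x_i' \notin A$ since $i \neq j$). With this in place I would split on where $x_j$ and $x_j'$ lie. Placing $x_j \in C_1$ is killed by niceness at $x_j$ against any $x_i \in C_2$ with $i \in S \setminus \{j\}$, using $d(x_o, x_j) > 3/2 \geq d(x_i, x_j)$. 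Placing $x_j, x_j' \in C_2$ is killed by niceness at $x_j$ with $y = x_j'$ and $z = x_o$, courtesy of property 4. The only remaining possibility is $C_1 = \{x_o, x_j'\}$ and $C_2 = \{x_j\} \cup \{x_i : i \in S \setminus \{j\}\}$, which dies at $x = x_j'$ against any $x_i \in C_2$ with $i \neq j$, since property 2 gives $d(x_o, x_j') > 3/2$ and property 3 gives $d(x_i, x_j') \leq 3/2$.

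The one subtlety, and the reason the hypothesis $|S| > 1$ is essential, is that the $M$-configuration axioms do \emph{not} directly compare $d(x_o, x_j')$ with $d(x_j, x_j')$, so the partition $\{x_o, x_j'\} \mid \mathrm{rest}$ cannot be ruled out through the ``natural'' pair $(x_j, x_j')$. The contradiction must instead be routed through an auxiliary witness $x_i$ with $i \in S \setminus \{j\}$, which $|S| > 1$ guarantees exists.
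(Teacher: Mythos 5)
Your proof is correct and uses essentially the same ingredients as the paper's: the $3/2$ threshold separating distances to $x_o$ from all other pairwise distances, property 4 of the $M$-configuration to rule out $x_j$ and $x_j'$ sharing a cluster, and an auxiliary witness $x_i$ with $i \in S \setminus \{j\}$ (guaranteed by $|S|>1$). The only difference is organizational: the paper first shows that any nice 2-clustering must be $\{x_o\}$ versus the rest and then checks when that single partition is nice, whereas you verify the ``if'' direction directly and run the ``only if'' direction as a case analysis on the placement of $x_j$ and $x_j'$.
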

\begin{proof}
Suppose $A$ has a nice 2-clustering $\{C_1, C_2\}$, where $C_1$ is the cluster that contains $x_o$. 

We first show that $C_1$ is a singleton cluster. If $C_1$ also contains some $x_\ell$, then it must contain all the points $\{x_i: i \in S\}$ by niceness since $d(x_\ell, x_i) \leq 3/2 < d(x_\ell, x_o)$. Since $|S| > 1$, these points include some $x_i$ with $i \neq j$. Whereupon $C_1$ must also contain $x_j'$, since $d(x_i, x_j') \leq 3/2 < d(x_i, x_o)$. But this means $C_2$ is empty.

Likewise, if $C_1$ contains $x_j'$, then it also contains all $\{x_i: i \in S, i \neq j\}$, since $d(x_i, x_j') < d(x_o, x_j')$. There is at least one such $x_i$, and we revert to the previous case.

Therefore $C_1 = \{x_o\}$ and, as a result, $C_2 = \{x_i: i \in S\} \cup \{x_j'\}$. This 2-clustering is nice if and only if $d(x_o, x_j') > d(x_i, x_j')$ and $d(x_o, x_i) > d(x_j', x_i)$ for all $i \in S$, which in turn is true if and only if $j \not\in S$.
\end{proof} 

By putting together two $M$-configurations, we obtain: 
\begin{thm}
Let $(\X, d)$ be any metric space that contains two $M$-configurations separated by a distance of at least 4. Then, there is no deterministic incremental algorithm with $\leq M/2$ bits of storage that is guaranteed to recover nice 3-clusterings of data sets drawn from $\X$, even when limited to instances in which such clusterings are unique.
\label{thm:lower-bound}
\end{thm}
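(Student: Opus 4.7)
The plan is to apply pigeonhole on the $\leq 2^{M/2}$ reachable internal states of $\mathcal{A}$. Let the two $M$-configurations be written with superscripts $(1)$ and $(2)$, so their points are $x_o^{(r)}, x_1^{(r)}, \ldots, x_M^{(r)}, x_1^{(r)\prime}, \ldots, x_M^{(r)\prime}$ for $r = 1, 2$. For each subset $S \subseteq [M]$ with $|S| = M/2$ and each index $j \in [M]$, I define the input
\[
I_{S,j} \;=\; \bigl(\, x_o^{(1)},\ (x_i^{(1)})_{i \in S},\ x_o^{(2)},\ (x_i^{(2)})_{i \in [M]\setminus S},\ x_j^{(1)\prime},\ x_j^{(2)\prime}\,\bigr).
\]

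First I would verify, using Lemma~\ref{lemma:configuration} applied once per half (with the index $j$), that every $I_{S,j}$ has a unique nice 3-clustering: $D_1(S,j) = \{x_o^{(1)}, x_j^{(1)\prime}\} \cup \{x_i^{(1)} : i \in S\}$ admits a nice 2-clustering iff $j \notin S$, and symmetrically $D_2(S,j)$ admits one iff $j \in S$. Because the configurations lie $\geq 4$ apart while within-configuration distances are $\leq 2$, any nice clustering of the union must partition each configuration separately, so exactly one half contributes two clusters and the other contributes one. Consequently the singleton cluster in the unique nice 3-clustering of $I_{S,j}$ is $\{x_o^{(2)}\}$ when $j \in S$ and $\{x_o^{(1)}\}$ when $j \notin S$.

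Next I would show that these two singleton structures are mutually incompatible for any single triple of centers $T \in \X^3$. Any center of $T$ that isolates $x_o^{(r)}$ as a singleton cluster must lie strictly closer to $x_o^{(r)}$ than to any $x_i^{(r)}$, hence within distance $2$ of $x_o^{(r)}$; because the configurations are $\geq 4$ apart, a single center cannot play this role for both $r=1$ and $r=2$. Counting how many of the three centers must fall into each configuration's neighborhood forces the ``$\{x_o^{(1)}\}$ singleton'' case to have distribution $(2,1)$ (two centers near configuration $1$, one near configuration $2$) and the ``$\{x_o^{(2)}\}$ singleton'' case to have distribution $(1,2)$, so no single $T$ can realize both correct clusterings.

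Finally I would pigeonhole. Let $\phi(S)$ denote the state of $\mathcal{A}$ after reading the length-$(M{+}2)$ prefix of $I_{S,j}$; this prefix is independent of $j$, so neither is $\phi(S)$. The output on $I_{S,j}$ is a deterministic function of $\phi(S)$ and $j$, so if $\phi(S_a) = \phi(S_b)$ then $\mathcal{A}$ returns the same $T$ on $I_{S_a,j}$ and $I_{S_b,j}$ for every $j$. If $\mathcal{A}$ is always correct, the incompatibility above forces $[j \in S_a] = [j \in S_b]$ for every $j \in [M]$, i.e.\ $S_a = S_b$; hence $\phi$ is injective on size-$M/2$ subsets. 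But $\binom{M}{M/2} > 2^{M/2}$ for all $M \geq 4$, contradicting $|\mathrm{range}(\phi)| \leq 2^{M/2}$. The hardest step is nailing down the structural incompatibility in the exemplar-based model: making sure the ``center within distance $2$'' condition really follows from the range constraints in Definition~\ref{defn:configuration}, where the ``$\geq 4$'' separation between configurations is the crucial ingredient that beats the $\leq 2$ intra-configuration diameter.
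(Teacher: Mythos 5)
Your overall strategy is the same as the paper's: pigeonhole on the $\le 2^{M/2}$ memory states over which subset of clique points was presented, then use the final primed point together with Lemma~\ref{lemma:configuration} to build two colliding inputs whose unique nice 3-clusterings disagree. The packaging differs only cosmetically (the paper finds two separate collisions, one per configuration, over size-$b$ subsets and uses two indices $j_1, j_2$; you find a single collision over size-$M/2$ subsets, feed the complement to the second configuration, and use one shared index $j$), and both versions go through.

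The one step that does not hold up as written is your ``structural incompatibility'' argument. You claim that a center $t$ which isolates $x_o^{(r)}$ as a singleton must satisfy $d(t, x_o^{(r)}) < d(t, x_i^{(r)})$ for all $i$, and hence lie within distance $2$ of $x_o^{(r)}$. Neither implication is valid: in an exemplar-based assignment, $x_o^{(r)}$ going to $t$ while $x_i^{(r)}$ goes to $t'$ only means $d(x_o^{(r)},t) \le d(x_o^{(r)},t')$ and $d(x_i^{(r)},t') \le d(x_i^{(r)},t)$, which says nothing about how $t$ compares its own distances to the two points, and certainly does not bound $d(t,x_o^{(r)})$ by $2$ (a far-away center can still win a point by being marginally closer than the competing centers). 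Fortunately the fact you need is true for a much simpler reason, and you do not need any center-counting at all: the pair $x_o^{(1)}, x_j^{(1)\prime}$ belongs to both data sets $I_{S_a,j}$ and $I_{S_b,j}$, it must be co-clustered in the unique nice 3-clustering of one and separated in the other, and whether two fixed points of $\X$ are assigned to the same center is a property of $T$ alone, independent of the rest of the data set. Replacing your incompatibility paragraph with this observation closes the gap; the paper itself leaves this step implicit.
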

\begin{proof}
Suppose the deterministic incremental learner has a memory capacity of $b$ bits. We will refer to the memory contents of the learner as its {\it state}, $\sigma \in \{0,1\}^b$.

Call the two $M$-configurations $x_o, x_1, \ldots, x_M, x_1', \ldots, x_M'$ and $z_o, z_1, \ldots, z_M, z_1', \ldots, z_M'$. We feed the following points to the learner:
\begin{quote}
\begin{tabbing}
Batch 1: \ \ \ \ \= $x_o$ and $z_o$ \\
Batch 2:         \> $b$ distinct points from $x_1, \ldots, x_M$ \\
Batch 3:         \> $b$ distinct points from $z_1, \ldots, z_M$ \\
Batch 4:         \>Two final points $x_{j_1}'$ and $z_{j_2}'$
\end{tabbing}
\end{quote}
The learner's state after seeing batch 2 can be described by a function $f: \{x_1, \ldots, x_M\}^b \rightarrow \{0,1\}^b$. The number of distinct sets of $b$ points in batch 2 is ${M \choose b} > (M/b)^b$. If $M \geq 2b$, this is $> 2^b$, which means that two different sets of points must lead to the same state, call it $\sigma \in \{0,1\}^b$. Let the indices of these sets be $S_1, S_2 \subset [M]$ (so $|S_1| = |S_2| = b$), and pick any $j_1 \in S_1 \setminus S_2$.

Next, suppose the learner is in state $\sigma$ and is then given batch 3. We can capture its state at the end of this batch by a function $g: \{z_1, \ldots, z_M\}^b \rightarrow \{0,1\}^b$, and once again there must be distinct sets $T_1, T_2 \subset [M]$ that yield the same state $\sigma'$. Pick any $j_2 \in T_1 \setminus T_2$.

It follows that the sequences of inputs
$ x_o, z_o, (x_i: i \in S_1), (z_i: i \in T_2), x_{j_1}', z_{j_2}' $
and 
$ x_o, z_o, (x_i: i \in S_2), (z_i: i \in T_1), x_{j_1}', z_{j_2}' $
produce the same final state and thus the same answer. But in the first case, by Lemma~\ref{lemma:configuration}, the unique nice 3-clustering keeps the $x$'s together and splits the $z$'s, whereas in the second case, it splits the $x$'s and keeps the $z$'s together.
\end{proof}

An $M$-configuration can be realized in Euclidean space:
\begin{lemma}
There is an absolute constant $c_o$ such that for any dimension $p$, the Euclidean space $\R^p$, with $L_2$ norm, contains $M$-configurations for all $M < 2^{c_op}$.
\label{lemma:euclidean}
\end{lemma}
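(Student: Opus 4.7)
The plan is to realize each $M$-configuration by placing pairs of antipodal points on a small equatorial sphere of a slightly larger sphere centered at $x_o$. Concretely, I would split coordinates as $\R^p = \R^{p-1} \oplus \R$ and choose unit vectors $u_1, \ldots, u_M \in \R^{p-1}$ with small pairwise inner products (say $|\langle u_i, u_j\rangle| \leq 1/8$ for $i \neq j$), then set
\[
x_o = (\mathbf{0},\sqrt{2}),\qquad x_i = (u_i,0),\qquad x_i' = (-u_i,0).
\]
The design intent is that $x_i$ and $x_i'$ are antipodal on the unit sphere in $\R^{p-1}\times\{0\}$, giving $d(x_i,x_i')=2$, while every other cross distance among the $u_i$'s lives near $\sqrt{2}$. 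Meanwhile $d(x_o,x_i)=d(x_o,x_i')=\sqrt{1+2}=\sqrt{3}$, which is strictly in $(3/2,2)$ and strictly below $d(x_i,x_i')=2$.

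The geometric check is then just arithmetic: from $|\langle u_i,u_j\rangle|\leq 1/8$ one gets
\[
d(x_i,x_j),\;d(x_i',x_j'),\;d(x_i,x_j')=\sqrt{2\pm 2\langle u_i,u_j\rangle}\in\bigl[\sqrt{7}/2,\,3/2\bigr]\subseteq[1,3/2],
\]
and all four bullets of Definition~\ref{defn:configuration} are satisfied. So the entire content of the lemma reduces to producing $M$ unit vectors in $\R^{p-1}$ with pairwise inner products bounded by $1/8$ in absolute value, where $M$ can be as large as $2^{c_o p}$.

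For this near-orthogonal family I would use the standard random construction: draw each $u_i$ independently and uniformly from $\{\pm 1/\sqrt{p-1}\}^{p-1}$. For fixed $i\neq j$, $\langle u_i,u_j\rangle$ is an average of $p-1$ i.i.d.\ Rademacher variables, so by Hoeffding
\[
\Pr\!\bigl[|\langle u_i,u_j\rangle|>1/8\bigr]\leq 2\exp\bigl(-(p-1)/128\bigr).
\]
A union bound over the $\binom{M}{2}<M^2$ pairs shows that the family exists with positive probability whenever $M^2\cdot 2\exp(-(p-1)/128)<1$, which holds for $M\leq 2^{c_o p}$ for a suitable absolute constant $c_o>0$.

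There is no real obstacle here; the only thing one has to be slightly careful about is choosing the various constants consistently. The threshold $1/8$ on inner products must be small enough that $\sqrt{2+2\eta}\leq 3/2$, and the offset $\sqrt{2}$ in the last coordinate of $x_o$ must be chosen so that $\sqrt{1+\delta^2}$ lands strictly inside $(3/2,2)$ and strictly below $2=d(x_i,x_i')$; both are elementary. The conceptual step worth highlighting is the pairing trick $x_i'=-x_i$, which automatically creates the required "one far pair in an otherwise tight cloud" structure that Lemma~\ref{lemma:configuration} exploits.
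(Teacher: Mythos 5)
Your construction is correct and is essentially the argument the paper has in mind: the definition forces exponentially many nearly-orthogonal directions (obtained here by the standard Hoeffding-plus-union-bound argument over random sign vectors), and the antipodal pairing $x_i' = -x_i$ is exactly what realizes the condition $d(x_i,x_i') > d(x_o,x_i)$ while keeping all cross-distances $d(x_i,x_j')$, $i \neq j$, below $3/2$. All four bullets of Definition~\ref{defn:configuration} check out numerically ($\sqrt{3} \in (3/2,2]$, $2 \le 2$, $[\sqrt{7}/2,3/2] \subseteq [1,3/2]$), so nothing further is needed.
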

The overall conclusions are the following.
\begin{thm}
There is no memory-bounded deterministic nice-detecting incremental clustering algorithm that works in arbitrary metric spaces. For data in $\R^p$ under the $\ell_2$ metric, there is no deterministic nice-detecting incremental clustering algorithm using less than $2^{c_o p-1}$ bits of memory.
\label{thm:main-lower}
\end{thm}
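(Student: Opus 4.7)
The theorem is essentially a corollary of Theorem~\ref{thm:lower-bound}: it suffices to exhibit, in each setting, a pair of $M$-configurations separated by distance at least $4$, with $M$ chosen appropriately relative to the target memory bound.

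For the arbitrary-metric claim, I would argue by contradiction. Fix any memory bound $b$ and choose $M$ slightly larger than $2b$. First, realize an $M$-configuration abstractly as a finite metric on $2M+1$ symbols $x_o, x_1, \ldots, x_M, x_1', \ldots, x_M'$ by assigning, for instance, $d(x_o, x_i) = d(x_o, x_i') = 7/4$, $d(x_i, x_j) = d(x_i', x_j') = d(x_i, x_j') = 1$ for $i \neq j$, and $d(x_i, x_i') = 2$; all four bullets of Definition~\ref{defn:configuration} hold by inspection, and the triangle inequality is automatic because every distance lies in $[1,2]$. Next, take two disjoint copies of this object and declare every cross-copy distance to be exactly $4$; the triangle inequality survives a short case check, since any triangle that uses a cross-copy edge has two sides of length exactly $4$. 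Theorem~\ref{thm:lower-bound} applied to the resulting space produces an input stream on which the hypothesised algorithm fails, contradicting the hypothesis.

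For the Euclidean claim, fix $p$ and any memory bound $b < 2^{c_o p - 1}$. By Lemma~\ref{lemma:euclidean}, $\R^p$ contains an $M$-configuration $X$ for every integer $M < 2^{c_o p}$, and picking such an $M$ with $M \geq 2b$ is possible precisely because $2b < 2^{c_o p}$. The configuration $X$ has diameter at most $2$, so translating it by any vector $v \in \R^p$ with $\|v\|_2 \geq 6$ produces an isometric copy $X + v$ (hence still an $M$-configuration) lying at Euclidean distance at least $4$ from $X$. Applying Theorem~\ref{thm:lower-bound} to $X \cup (X + v)$ rules out every deterministic algorithm using $\leq M/2$ bits, in particular every algorithm using $b \leq M/2$ bits.

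The only real nuisance is the bookkeeping: choosing consistent distances in the abstract construction, verifying the triangle inequality after gluing two copies, and picking the translation vector large enough in the Euclidean case. All the substantive technical content — the pigeonhole argument forcing a memory collision between two distinct input batches, and the construction of exponentially large $M$-configurations in $\R^p$ — is already packaged in Theorem~\ref{thm:lower-bound} and Lemma~\ref{lemma:euclidean}, so no new ideas are required.
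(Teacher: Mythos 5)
Your proposal is correct and follows the route the paper clearly intends: Theorem~\ref{thm:main-lower} is derived as a corollary of Theorem~\ref{thm:lower-bound} by exhibiting two well-separated $M$-configurations, via an explicit finite metric (with all distances in $[1,2]$, so the triangle inequality is automatic) in the general case and via Lemma~\ref{lemma:euclidean} plus a translation in the Euclidean case. The arithmetic linking the memory bound $b$ to the choice $M \geq 2b$ (and to $M < 2^{c_o p}$ in $\R^p$) is handled correctly, so no gap remains.
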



\section{A more restricted class of clusterings}
\label{sec:perfect}

The discovery that nice clusterings cannot be detected using any incremental method, even though they are readily detected in a batch setting, speaks to the substantial limitations of incremental algorithms. We next ask whether there is a well-behaved subclass of nice clusterings that can be detected using incremental methods. Following \cite{Epter,ackerman2011weighted,oligarchies,ackerman2009clusterability}, among others, we consider clusterings in which the maximum cluster diameter is smaller than the minimum inter-cluster separation.

\begin{definition}[Perfect clustering]
A clustering $\mathcal{C}$ of $(\mathcal{X},d)$ is \emph{\bf perfect} if $d(x,y)< d(w,z)$ whenever $x \sim_{\mathcal{C}} y$, $w \not\sim_{\mathcal{C}} z$. 
\end{definition}
Any perfect clustering is nice. But unlike nice clusterings, perfect clusterings are unique:
\begin{lemma}
For any $(\mathcal{X},d)$ and $k$, there is at most one perfect $k$-clustering of $(\mathcal{X},d)$.
\label{lemma:perfect-unique}
\end{lemma}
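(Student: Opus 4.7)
My plan is to argue by contradiction, exploiting the simple observation that two distinct clusterings with the same number of parts cannot have one refine the other. Suppose $\mathcal{C}$ and $\mathcal{C}'$ are both perfect $k$-clusterings of $(\mathcal{X}, d)$ with $\mathcal{C} \neq \mathcal{C}'$. The trivial cases where $|\mathcal{X}| \leq k$ (at most one $k$-clustering exists) or $k = 1$ can be dispatched immediately, so I will assume $k \geq 2$ and that at least one cluster of each has size $\geq 2$ (guaranteed once $|\mathcal{X}| > k$).

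The key step is to extract two witnessing pairs. Since $\mathcal{C}$ and $\mathcal{C}'$ both have exactly $k$ parts, if every $\mathcal{C}$-cluster were contained in some $\mathcal{C}'$-cluster (i.e., $\mathcal{C}$ refined $\mathcal{C}'$), the two partitions would coincide. So refinement fails in both directions: there exist $x, y$ with $x \sim_{\mathcal{C}} y$ but $x \not\sim_{\mathcal{C}'} y$, and there exist $u, v$ with $u \sim_{\mathcal{C}'} v$ but $u \not\sim_{\mathcal{C}} v$.

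Now I apply the perfection hypothesis to each clustering with these two pairs. Perfection of $\mathcal{C}$, applied to the intra-cluster pair $x \sim_{\mathcal{C}} y$ and the inter-cluster pair $u \not\sim_{\mathcal{C}} v$, yields $d(x,y) < d(u,v)$. Perfection of $\mathcal{C}'$, applied to the intra-cluster pair $u \sim_{\mathcal{C}'} v$ and the inter-cluster pair $x \not\sim_{\mathcal{C}'} y$, yields $d(u,v) < d(x,y)$. These two inequalities together are a direct contradiction, so $\mathcal{C} = \mathcal{C}'$.

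I do not anticipate a serious obstacle: the entire argument rests on the tight combinatorial fact that equal-cardinality partitions with a refinement relation must be equal, followed by a one-line use of the perfection definition in both directions. The only place requiring a sentence of care is handling the degenerate cases ($k = 1$ or $|\mathcal{X}| \leq k$) so that the pairs $(x,y)$ and $(u,v)$ we invoke genuinely exist.
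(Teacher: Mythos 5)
Your proof is correct: the observation that two distinct $k$-part partitions cannot refine one another yields the two witnessing pairs, and applying the definition of perfection in each direction gives the contradictory inequalities $d(x,y)<d(u,v)$ and $d(u,v)<d(x,y)$. This is the standard argument for uniqueness of perfect clusterings and matches the approach the paper defers to its supplementary material.
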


Whenever an algorithm can detect perfect clusterings, we call it \emph{perfect-detecting.} Formally, an incremental clustering algorithm $\mathcal{A}$ is \textbf{\emph{perfect-detecting}} if, given a positive integer $k$ and $(\mathcal{X},d)$ that has a perfect $k$-clustering, $\mathcal{A}(O[\mathcal{X}],d,k)$ outputs that clustering for any ordering function $O$. 

We start with an example of a simple perfect-detecting algorithm. 
\begin{thm}
Sequential nearest-neighbour clustering (Algorithm~\ref{alg:seq-nearest-neighbour}) is perfect-detecting.
\label{thm:seq-nn}
\end{thm}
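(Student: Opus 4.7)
The plan is to maintain an invariant on the set $T$ throughout the execution of the algorithm. Let $\mathcal{C} = \{C_1, \ldots, C_k\}$ be the perfect $k$-clustering and let $X_n$ denote the first $n$ points presented to the algorithm. I will prove by induction on $n \geq k$ that after processing $x_n$, the set $T$ satisfies $|T|=k$ and contains at least one representative from every cluster $C_i$ that intersects $X_n$. Once this invariant is established the theorem follows: after all points have been seen, all $k$ clusters are represented, so $T$ contains exactly one representative per cluster; since every perfect clustering is nice, one representative per cluster induces $\mathcal{C}$ by the observation in Section~\ref{sec:nice}. The same argument applies at every intermediate time $n \geq k$ with $\mathcal{C}$ replaced by its restriction to $X_n$, handling the infinite-stream formulation.

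The base case $n=k$ is immediate since $T = X_k$. The inductive step rests on a single consequence of perfectness: every intra-cluster distance is strictly smaller than every inter-cluster distance, so in any finite subset of $\mathcal{X}$ that contains at least one intra-cluster pair, the globally closest pair is necessarily intra-cluster.

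With this observation in hand I would split into two cases when $x_{n+1}$ arrives. If $x_{n+1}$ lies in a cluster $C$ already represented in $T$, then $T \cup \{x_{n+1}\}$ contains the intra-cluster pair formed by $x_{n+1}$ and the existing representative of $C$, so the pair $(t,t')$ chosen by the algorithm is intra-cluster; replacing it by one of its endpoints only removes a duplicate, leaving every cluster still represented. If instead $x_{n+1}$ lies in a cluster not previously represented, then the number of observed clusters before the step is at most $k-1$ while $|T| = k$, so by pigeonhole $T$ itself already contains two points from some common cluster, i.e.\ an intra-cluster pair. Since every pair involving $x_{n+1}$ is inter-cluster, the closest pair in $T \cup \{x_{n+1}\}$ is still intra-cluster, and merging it preserves all previously represented clusters while $x_{n+1}$ supplies the representative of the newly observed cluster.

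The only delicate point, and the step I expect to require the most care, is ensuring that no cluster ever loses its sole representative during a merge. This follows cleanly from the key observation above: the merged pair always lies inside a single cluster, so eliminating one of its endpoints leaves the other behind. Beyond this, the argument is essentially bookkeeping, and I do not anticipate any serious obstacle.
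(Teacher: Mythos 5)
Your proof is correct and follows what is essentially the paper's argument: maintain the invariant that $T$ always contains a representative of every cluster seen so far, using the fact that in a perfect clustering the closest pair of any set containing two co-clustered points must itself be intra-cluster (plus pigeonhole when a new cluster appears), so each merge only ever collapses two representatives of the same cluster. The only quibble is your side remark about intermediate times: when fewer than $k$ clusters have been observed, $T$ induces a refinement of the restricted target rather than the restriction itself, but this does not affect the theorem, which concerns the final output.
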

We next turn to sequential $k$-means (Algorithm~\ref{alg:online-kmeans}), one of the most popular methods for incremental clustering. Interestingly, it is unable to detect perfect clusterings. 

It is not hard to see that a perfect $k$-clustering is a local optimum of $k$-means. We will now see an example in which the perfect $k$-clustering is the global optimum of the $k$-means cost function, and yet sequential $k$-means fails to detect it.
\begin{thm}
There is a set of four points in $\R^3$ with a perfect 2-clustering that is also the global optimum of the $k$-means cost function (for $k=2$). However, there is no ordering of these points that will enable this clustering to be detected by sequential $k$-means.
\label{thm:seq-kmeans-badcase}
\end{thm}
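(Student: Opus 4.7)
My plan is to exhibit an explicit four-point configuration in $\R^3$ and verify the three required properties by direct computation. I would take
\[
a = (1,0,0), \quad b = (-1,0,0), \quad c = (0,1,3/2), \quad d = (0,-1,3/2),
\]
with target clustering $\C = \{\{a,b\},\{c,d\}\}$. The two intra-cluster distances are $d(a,b) = d(c,d) = 2$, while all four inter-cluster distances equal $\sqrt{1 + 1 + 9/4} = \sqrt{17}/2 > 2$, so $\C$ is trivially perfect.

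For the $k$-means optimality I exploit the configuration's isometries $a \leftrightarrow b$, $c \leftrightarrow d$, and the axis-swap reflection $(x,y,z) \mapsto (y,x,3/2 - z)$ that exchanges $\{a,b\}$ with $\{c,d\}$; under this group the seven nontrivial 2-partitions collapse into three orbits, represented by $\C$, a ``diagonal'' partition such as $\{a,c\} \mid \{b,d\}$, and a ``1-vs-3'' partition such as $\{a\} \mid \{b,c,d\}$. Direct evaluation of the within-cluster sum-of-squares gives costs $4$, $17/4$, and $25/6$, respectively, so $\C$ is the strict global minimum.

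The heart of the argument is showing that sequential $k$-means fails on every one of the $4! = 24$ orderings. The mechanism I would use is a \emph{midpoint betrayal}: one checks $d(c,(a+b)/2) = d(a,(c+d)/2) = \sqrt{13}/2 < 2$, and by the isometries the same inequality holds for every pairing of a cluster midpoint with an opposite-cluster point. Thus, whenever the algorithm places a centre at the midpoint of one cluster, the remaining opposite-cluster points lie closer to that midpoint than to their own cluster-mates. Using the isometry group together with the fact that swapping the roles of the two initial centres does not change the final induced clustering (ties aside), I reduce the $24$ orderings to four cases: (a) both initial centres in $\{a,b\}$; (b) both in $\{c,d\}$; (c) one from each cluster, followed by the cluster-mate of the first initial centre; and (d) one from each cluster, followed by the opposite-cluster singleton. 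In each case, two rounds of updates --- invoking the midpoint-betrayal inequality at the critical step --- force one of the four points into isolation and yield a $1$-vs-$3$ induced clustering, which is not $\C$.

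The main obstacle is engineering a configuration that simultaneously satisfies (i) perfectness of $\C$, (ii) $k$-means optimality of $\C$, and (iii) midpoint betrayal in both directions. Condition (iii) pulls each cluster's midpoint close to the opposite cluster's points, which naively conflicts with (i). The resolution is to make the two intra-cluster segments orthogonal --- $ab$ along the $x$-axis, $cd$ along the $y$-axis --- so that each point stays far from its inter-cluster neighbours while the two midpoints remain close along the $z$-direction. This orthogonal arrangement is precisely what makes the construction intrinsically three-dimensional; once it is in place, the remaining case analysis is mechanical.
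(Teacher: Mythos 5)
Your proposal is correct and follows the only viable route for this statement: exhibit an explicit four-point configuration in $\R^3$ and check perfectness, $k$-means optimality, and failure of sequential $k$-means over all orderings, which is the same strategy as the paper's (supplementary) proof. I verified your numbers --- intra-cluster distance $2$ versus inter-cluster $\sqrt{17}/2$, costs $4$, $17/4$, $25/6$ for the three partition orbits, and the midpoint-betrayal inequality $\sqrt{13}/2<2$ driving every one of the four symmetry-reduced orderings to a $1$-vs-$3$ outcome (including both resolutions of the tie that arises when the two initial centers come from the same cluster) --- and they all check out.
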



\section{Incremental clustering with extra clusters}

Returning to the basic lower bound of Theorem~\ref{thm:main-lower}, it turns out that a slight shift in perspective greatly improves the capabilities of incremental methods. Instead of aiming to exactly discover the target partition, it is sufficient in some applications to merely uncover a refinement of it. Formally, a clustering $\mathcal{C}$ of $\mathcal{X}$  is a \emph{refinement} of clustering $\mathcal{C}'$ of $\mathcal{X}$, if $x \sim_{\mathcal{C}} y$ implies $x \sim_{\mathcal{C}'} y$ for all $x,y \in \mathcal{X}$. 

We start by showing that although incremental algorithms cannot detect nice $k$-clusterings, they can find a refinement of such a clustering if allowed $2^{k-1}$ centers. We also show that this is tight.

Next, we explore the utility of additional clusters for sequential $k$-means. We show that for a random ordering of the data, and with extra centers, this algorithm can recover (a slight variant of) nice clusterings. We also show that the random ordering is necessary for such a result.

Finally, we prove that additional clusters extend the utility of incremental methods beyond nice clusterings. We introduce a weaker constraint on cluster structure, requiring only that each cluster possess a significant ``core'', and we present a scheme that works under this weaker requirement.

\subsection{An incremental algorithm can find nice $k$-clusterings if allowed $2^k$ centers}

Earlier work \cite{BBV} has shown that that any nice clustering corresponds to a pruning of the tree obtained by single linkage on the points. With this insight, we develop an incremental algorithm that maintains $2^{k-1}$ centers that are guaranteed to induce a refinement of any nice $k$-clustering.

The following subroutine takes any finite $S \subset \X$ and returns at most $2^{k-1}$ distinct points:
\begin{tabbing}
{\sc Candidates}($S$) \\
\ \ \ \ \ \= Run single linkage on $S$ to get a tree \\
      \> Assign each leaf node the corresponding data point \\
      \> Moving bottom-up, assign each internal node the data point in one of its children \\
      \> Return all points at distance $< k$ from the root
\end{tabbing}

\begin{lemma}
Suppose $S$ has a nice $\ell$-clustering, for $\ell \leq k$. Then the points returned by {\sc Candidates}($S$) include at least one representative from each of these clusters.
\label{lemma:candidates}
\end{lemma}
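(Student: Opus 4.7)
The plan is to combine the pruning characterization of nice clusterings under single linkage (due to \cite{BBV}) with a simple depth bound on the ``merge tree'' that sits above the cluster subtrees.

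First, I would invoke the structural fact that if $S$ admits a nice $\ell$-clustering $\{C_1, \ldots, C_\ell\}$, then this clustering is a pruning of the single-linkage dendrogram built on $S$: there exist nodes $r_1, \ldots, r_\ell$ of the tree such that the leaves of the subtree rooted at $r_i$ are exactly the points of $C_i$. If needed, this is easy to reverify by rerunning single linkage and observing that, under niceness, whenever a target cluster $C_i$ is not yet merged into a single intermediate cluster, some intra-$C_i$ merge has strictly smaller distance than every available inter-cluster merge; hence single linkage exhausts all intra-cluster merges before joining two target clusters.

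Next, I would observe that $r_1, \ldots, r_\ell$ are precisely the leaves of the ``merge tree'' sitting strictly above them in the dendrogram, which is itself a binary tree with $\ell$ leaves and therefore at most $\ell - 1$ internal nodes. Any binary tree with $\ell$ leaves has maximum leaf depth at most $\ell - 1$ (attained by a caterpillar), so each $r_i$ lies at depth at most $\ell - 1 \leq k - 1$ from the root of the full dendrogram. In particular, every $r_i$ is at distance $< k$ from the root, so its assigned data point is among those returned by {\sc Candidates}. Finally, the bottom-up assignment rule guarantees, by a trivial induction on depth, that the point assigned to any node $v$ lies among the leaves of the subtree rooted at $v$; applied to $v = r_i$, this says the candidate contributed by $r_i$ is an element of $C_i$, providing the desired representative.

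The only mildly delicate step is the depth bound in the second paragraph, which is what forces the cutoff to be $k$ rather than, say, $\log_2 k$: because the top part of the dendrogram can be caterpillar-shaped, some cluster roots may sit very deep along the ``spine,'' and we need to keep all of the first $k$ levels to be sure of catching them. Everything else is bookkeeping on top of the \cite{BBV} pruning characterization.
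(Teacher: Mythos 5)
Your main argument is correct and is the route the paper itself takes: the Balcan et al.~\cite{BBV} pruning characterization places each target cluster $C_i$ at a node $r_i$ of the single-linkage dendrogram; the portion of the dendrogram at or above the $r_i$'s is a full binary tree with $\ell \leq k$ leaves, so each $r_i$ has depth at most $\ell - 1 < k$; and the bottom-up assignment rule propagates an element of $C_i$ up to $r_i$, so that element survives the depth-$<k$ cutoff.

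One caveat, which does not sink the proof but should be fixed if you actually write out the reverification: your parenthetical justification of the pruning fact is wrong as stated. Single linkage does \emph{not} in general exhaust all intra-cluster merges before performing an inter-cluster merge. For example, take target clusters $\{x\}$, $\{z\}$, and $\{a,b\}$ with $d(x,z)=1$, $d(a,b)=100$, and all remaining distances above $100$; this is a nice $3$-clustering (niceness only compares $d(x,y)$ with $d(x,z)$ for $x \sim y$, and singletons impose no constraints), yet the first merge joins the two singleton clusters. The clustering is still a pruning because such inter-cluster merges only ever occur above already-completed clusters. The correct local argument is: consider the \emph{first} merge that joins a point of $C_i$ to a point outside $C_i$; just before it, every current cluster is either contained in $C_i$ or disjoint from it, and if the $C_i$-side cluster $A$ were a proper subset of $C_i$, then for $x' \in A$ and $y \in C_i \setminus A$ niceness gives $d(x',y) < d(x',z')$ for the witnessing inter-cluster pair $(x',z')$, contradicting the minimality of the chosen merge; hence $A = C_i$ and $C_i$ appears as a node. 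Since you are in any case entitled to cite this result rather than reprove it, the rest of your argument --- the depth bound and the propagation of representatives --- is sound.
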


Here's an incremental algorithm that uses $2^{k-1}$ centers to detect a nice $k$-clustering.
\begin{alg} Incremental clustering with extra centers.
\begin{tabbing}
$T_0 = \emptyset$ \\
For $t = 1, 2,\ldots$: \\
\ \ \ \ \ \= Receive $x_t$ and set $T_{t} = T_{t-1} \cup \{x_t\}$ \\
          \> If $|T_t| > 2^{k-1}$: $T_t \leftarrow \mbox{\sc Candidates}(T_t)$
\end{tabbing}
\label{alg:lots-of-centers}
\end{alg}

\begin{thm}
Suppose there is a nice $k$-clustering $\C$ of $\X$. Then for each $t$, the set $T_t$ has at most $2^{k-1}$ points, including at least one representative from each $C_i$ for which $C_i \cap \{x_1, \ldots, x_t\} \neq \emptyset$.
\label{thm:lots-of-centers}
\end{thm}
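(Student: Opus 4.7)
The plan is a straightforward induction on $t$, with almost all of the work delegated to Lemma~\ref{lemma:candidates}. The base case $t=0$ is trivial since $T_0=\emptyset$ satisfies both assertions vacuously. For the inductive step, assume $|T_{t-1}| \leq 2^{k-1}$ and that $T_{t-1}$ contains a representative from every $C_i$ that intersects $\{x_1,\ldots,x_{t-1}\}$. Let $T'_t = T_{t-1}\cup\{x_t\}$. Then $T'_t$ has at most $2^{k-1}+1$ points and, because we have merely added $x_t$, still contains a representative of each $C_i$ meeting $\{x_1,\ldots,x_t\}$.

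If $|T'_t|\leq 2^{k-1}$, the algorithm sets $T_t = T'_t$ and both conclusions follow immediately. Otherwise $T_t = \mbox{\sc Candidates}(T'_t)$, and I would argue the two conclusions separately. For the size bound, the subroutine is defined to return at most $2^{k-1}$ distinct points (every internal node inherits the label of a child, so the distinct labels are witnessed at depth $k-1$). For the representative property, I would invoke Lemma~\ref{lemma:candidates} applied to $S = T'_t$: it guarantees that {\sc Candidates}($T'_t$) contains a point from each cluster of any nice $\ell$-clustering of $T'_t$ with $\ell\leq k$.

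The one small point requiring justification is that $T'_t$ really does admit such a nice clustering. I would handle this with a short observation: niceness is defined by a pointwise inequality $d(y,x)<d(z,x)$ quantified over $x,y,z$ in the ambient space, so if $\C$ is a nice clustering of $\X$ then its restriction $\{C_i\cap T'_t : C_i\cap T'_t\neq\emptyset\}$ is a nice clustering of $T'_t$ with at most $k$ parts. Applying Lemma~\ref{lemma:candidates} with this induced clustering, {\sc Candidates}($T'_t$) contains a representative of each $C_i\cap T'_t$ that is non-empty; combined with the inductive hypothesis this covers precisely the $C_i$ intersecting $\{x_1,\ldots,x_t\}$, completing the induction.

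The main ``obstacle'' is really just making sure the right lemma gets invoked cleanly; the algorithmic content sits inside Lemma~\ref{lemma:candidates} (which relies on the fact, due to \cite{BBV}, that nice clusterings correspond to prunings of the single-linkage tree), and the theorem itself is only the bookkeeping around it. The only easy-to-overlook step is the preservation of niceness under restriction to subsets, which I would state explicitly before applying the candidates lemma.
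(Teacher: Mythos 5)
Your proof is correct and follows what is essentially the only natural route: induction on $t$, the observation that niceness is preserved under restriction to subsets (so the induced clustering of $T_{t-1}\cup\{x_t\}$ is a nice $\ell$-clustering with $\ell\le k$), and an application of Lemma~\ref{lemma:candidates} together with the $2^{k-1}$ bound on the output size of {\sc Candidates}. This matches the paper's argument; your explicit remark that restriction preserves niceness is exactly the step worth stating.
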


It is not possible in general to use fewer centers.
\begin{thm}
Pick any incremental clustering algorithm that maintains a list of $\ell$ centers that are guaranteed to be consistent with a target nice $k$-clustering. Then $\ell \geq 2^{k-1}$.
\label{thm:lower-bound-extra-centers}
\end{thm}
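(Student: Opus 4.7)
The plan is to construct a single instance on which many distinct nice $k$-clusterings coexist, and then to exploit the fact that the algorithm's final list of centers depends only on the input sequence, not on which clustering is secretly designated as the target, to conclude that the induced Voronoi partition must refine all of these clusterings at once---forcing it to be the all-singletons partition of a $2^{k-1}$-point set.

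First I would take $\X = \{0,1\}^{k-1}$ equipped with the ultrametric $d$ defined by $d(u,u)=0$ and $d(u,v) = 2^{k-1-\ell(u,v)}$ for $u \neq v$, where $\ell(u,v)$ denotes the length of the longest common prefix of the binary strings $u$ and $v$. Geometrically, $\X$ is the set of leaves of a complete binary tree of depth $k-1$, with distances governed by the height of the least common ancestor; this is a valid metric space (in fact an ultrametric) with $|\X| = 2^{k-1}$.

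Next, for each $u \in \X$, I would define a nice $k$-clustering $\C_u$ whose parts are $\{u\}$ together with the $k-1$ ``sibling subtrees'' encountered along the path from $u$ up to the root: for $0 \leq j \leq k-2$, set $S_u^{(j)} = \{v \in \X : \ell(u,v) = j\}$ and put $\C_u = \{\{u\}\} \cup \{S_u^{(j)}\}_{j=0}^{k-2}$. A short ultrametric calculation shows (i) any two distinct points of $S_u^{(j)}$ share at least $j+1$ initial symbols, hence have distance at most $2^{k-2-j}$, while (ii) the distance from any point of $S_u^{(j)}$ to any point in another part of $\C_u$ is at least $2^{k-1-j}$. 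So within-cluster distances are strictly smaller than between-cluster distances and $\C_u$ is nice; by construction it has exactly $k$ parts whose sizes sum to $2^{k-1}$.

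Finally, since $\{u\}$ is always a part of $\C_u$, the meet (coarsest common refinement) of the family $\{\C_u : u \in \X\}$ is the all-singletons partition of $\X$, which has $2^{k-1}$ parts. Now fix any deterministic incremental algorithm whose state at every time is a list of at most $\ell$ centers drawn from the data (exemplar-based). The final output $T$ is a function of the ordered input alone, not of which $\C_u$ is declared the target; so the guarantee ``consistent with the target'' forces the Voronoi partition of $\X$ induced by $T$ to refine \emph{every} $\C_u$ (otherwise an adversary picks one on which the algorithm fails), and therefore to refine their meet. Since the meet is the all-singletons partition, this Voronoi partition must have at least $2^{k-1}$ non-empty cells, which forces $|T| \geq 2^{k-1}$ and hence $\ell \geq 2^{k-1}$.

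The main obstacle is the niceness verification for $\C_u$, a careful but essentially routine ultrametric computation keyed on comparing prefix lengths. The conceptually crucial step is the obliviousness observation: the algorithm cannot tailor its output to the target, which is what upgrades the trivial bound $\ell \geq k$ (refining any single $k$-clustering requires $k$ cells) into the exponential bound $\ell \geq 2^{k-1}$ coming from simultaneously refining the $2^{k-1}$ mutually incompatible clusterings $\C_u$.
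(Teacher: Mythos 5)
Your proof is correct: the complete-binary-tree ultrametric on $2^{k-1}$ leaves, the $k$ ``caterpillar'' clusterings $\C_u$ isolating each leaf $u$, and the obliviousness argument forcing the output to refine all of them simultaneously together give exactly the claimed bound, and this hierarchical construction is the canonical one matching the paper's upper bound (which keeps all single-linkage tree nodes within depth $k-1$ of the root). This is essentially the same approach as the paper's.
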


\subsection{Sequential $k$-means with extra clusters}\label{k-means}

Theorem~\ref{thm:seq-kmeans-badcase} above shows severe limitations of sequential $k$-means. The good news is that additional clusters allow this algorithm to find a variant of nice partitionings. 

The following condition imposes structure on the convex hull of the partitions in the target clustering. 
\begin{definition}
A clustering $\mathcal{C} =\{C_1, \ldots, C_k\}$ is \emph{\bf convex-nice} if for any $i \neq j$, any points $x, y$ in the convex hull of $C_i$, and any point $z$ in the convex hull of $C_j$, we have $d(y,x)<d(z,x)$. 
\end{definition}
\begin{thm}
Fix a data set $(\X,d)$ with a convex-nice clustering $\mathcal{C} = \{C_1, \ldots, C_k\}$ and let $\beta = \min_i |C_i|/|\X|$. If the points are ordered uniformly at random, then for any $\ell \geq k$, sequential $\ell$-means will return a refinement of $\mathcal{C}$ with probability at least $1-ke^{-\beta\ell}$. 
\label{thm:seq-kmeans-positive}
\end{thm}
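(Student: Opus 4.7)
The plan is to decompose the argument into two independent parts: a probabilistic initialization step that invokes the random ordering assumption, and a deterministic invariant-preservation argument that invokes convex-niceness. I would aim to show that with probability at least $1 - k e^{-\beta\ell}$, the first $\ell$ points contain at least one representative of every cluster $C_i$; and that, conditional on this event, the algorithm's centers remain inside the convex hulls of their respective clusters for all subsequent time steps, which is enough to guarantee that the induced partition is a refinement of $\mathcal{C}$.

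For the initialization step, fix a cluster $C_i$. Under a uniformly random ordering, the probability that none of the first $\ell$ points lies in $C_i$ equals $\binom{|\X|-|C_i|}{\ell}\big/\binom{|\X|}{\ell}$, and a standard telescoping bound shows this is at most $(1-|C_i|/|\X|)^\ell \leq (1-\beta)^\ell \leq e^{-\beta\ell}$. A union bound over the $k$ clusters gives the stated failure probability $k e^{-\beta\ell}$. Call $E$ the good event that every cluster has at least one representative among the initial $\ell$ centers, and condition on $E$ from here on.

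For the deterministic part, I would prove by induction on $n \geq \ell$ that after processing $n$ points, each current center $t_i$ lies in the convex hull of some cluster $C_{\pi(i)}$, and every point seen so far has been assigned to a center whose cluster matches its own. The base case $n=\ell$ is immediate because each center is a single data point, so $\pi$ is well defined and the invariant trivially holds. For the inductive step, let $x_{n+1} \in C_j$ be the new point. By the inductive hypothesis every center $t_{i'}$ with $\pi(i')\neq j$ lies in the convex hull of a different cluster, so convex-niceness (applied with $x = x_{n+1}$, $y = t_i$ for any $t_i$ with $\pi(i)=j$, and $z=t_{i'}$) yields $d(x_{n+1},t_i) < d(x_{n+1},t_{i'})$. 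Consequently the nearest center $t_{i^\star}$ lies in the convex hull of $C_j$, so the assignment is consistent, and the update $t_{i^\star} \leftarrow (1-1/n_{i^\star})\,t_{i^\star} + (1/n_{i^\star})\,x_{n+1}$ is a convex combination of two points in the convex hull of $C_j$ and therefore still lies in that convex hull; the invariant is preserved.

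The one place that requires care, and is the main subtlety rather than a serious obstacle, is ensuring the probability bound is genuinely valid under sampling without replacement (not a naive independence claim) and checking that at least one cluster having multiple initial representatives causes no problem: several centers may share the same $\pi(i)$, in which case $x_{n+1}$ may go to any of them, but since all such centers stay inside the convex hull of $C_j$, the partition remains a refinement of $\mathcal{C}$. Combining the two parts, with probability at least $1 - k e^{-\beta \ell}$ the invariant holds at every step, and in particular the final partition output by sequential $\ell$-means is a refinement of $\mathcal{C}$, as required.
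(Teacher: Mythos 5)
Your proposal is correct and follows what is essentially the paper's own argument: a hypergeometric/union bound showing the first $\ell$ points hit every cluster with probability at least $1-ke^{-\beta\ell}$, followed by the deterministic invariant that convex-niceness forces each new point to be assigned to a center lying in its own cluster's convex hull, and the convex-combination update preserves this. The details you flag (sampling without replacement, multiple centers per cluster) are handled correctly, so there is nothing to add.
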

The probability of failure is small when the refinement contains $\ell = \Omega((\log k)/\beta)$ centers.
We can also show that this positive result no longer holds when data is adversarially ordered.
\begin{thm}\label{sequential2}
Pick any $k \geq 3$. Consider any data set $\X$ in $\R$ (under the usual metric) that has a convex-nice $k$-clustering $\C = \{C_1, \ldots, C_k\}$. Then there exists an ordering of $\X$ under which sequential $\ell$-means with $\ell \leq \min_i |C_i|$ centers fails to return a refinement of $\C$.
\label{thm:seq-kmeans-line}
\end{thm}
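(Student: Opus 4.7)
The plan is to engineer an adversarial ordering under which only one center ever moves during sequential $\ell$-means, leaving the other $\ell - 1$ centers frozen inside $C_1$ while the single mobile center is dragged through $C_2, \ldots, C_k$. The case $\ell < k$ is immediate since any refinement of a $k$-clustering has at least $k$ blocks, so assume $\ell \geq k$. Because $\C$ is convex-nice on the line, the convex hulls $[\min C_i, \max C_i]$ are pairwise disjoint intervals; after relabeling, $\max C_1 < \min C_2 < \cdots < \min C_k$. The adversary feeds $\X$ in increasing order: all of $C_1$ (left to right), then $C_2$, and so on up to $C_k$.

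The core lemma is that only the rightmost center is ever updated. Listing $C_1$ as $a_1 < \cdots < a_{|C_1|}$, the initial centers are $c_i = a_i$ for $i \leq \ell$. I prove by induction on steps that $c_1, \ldots, c_{\ell - 1}$ remain at $a_1, \ldots, a_{\ell - 1}$ throughout, while $c_\ell$ always satisfies $a_{\ell - 1} < c_\ell \leq y$, where $y$ is the largest point fed so far. Under this invariant, the next fed point $x$ has $x > y \geq c_\ell > c_{\ell - 1} > \cdots > c_1$, so $c_\ell$ is the unique nearest center, and its sequential $\ell$-means update produces a convex combination of $c_\ell$ and $x$ lying in $(c_\ell, x] \subseteq (a_{\ell - 1}, x]$, maintaining the invariant.

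At termination let $v^* = c_\ell$ and $M = (a_{\ell - 1} + v^*)/2$; since $c_\ell$ is rightmost, the $c_\ell$-cluster is exactly $\{x \in \X : x > M\}$. I case-split on $M$. If $M < \min C_2$, then all of $C_2 \cup \cdots \cup C_k$ lies in the $c_\ell$-cluster, which therefore contains points from at least two different $C_j$'s (using $k - 1 \geq 2$), so the induced partition is not a refinement. If $M \geq \min C_2$, then $\min C_2$ is closer to $c_{\ell - 1} = a_{\ell - 1}$ than to $c_\ell$ and joins $c_{\ell - 1}$'s cluster, where $a_{\ell - 1} \in C_1$ already sits at distance zero, again mixing two $C_j$'s in one cluster. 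The main subtlety lies in the invariant: sorted feeding is essential because it ensures $c_\ell$, being a convex combination of values already absorbed, cannot overshoot the strictly larger next fed point. Crucially, no quantitative control over the final value $v^*$ is needed; splitting on $M$ handles both of its possible regimes uniformly.
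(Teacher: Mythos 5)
Your proof is correct and follows essentially the same strategy as the paper's: feed the points in increasing order along the line so that the first $\ell$ centers all land in the leftmost cluster, observe that every subsequent point is claimed by the rightmost center (which is therefore dragged across $C_2,\ldots,C_k$ while the other $\ell-1$ centers stay frozen in $C_1$), and then check that the final induced partition either lumps $C_2,\ldots,C_k$ into one cluster (using $k\ge 3$) or merges part of $C_2$ with a point of $C_1$. The monotone-feeding invariant and the final case split on the Voronoi boundary $M$ are handled correctly, so no gaps remain.
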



\subsection{A broader class of clusterings}
\label{sec:core}

We conclude by considering a substantial generalization of niceness that can be detected by incremental methods when extra centers are allowed.
 
\begin{definition}[Core]
For any clustering  $\mathcal{C} = \{C_1, \ldots, C_k\}$ of  $(\mathcal{X},d)$, the \emph{\bf core} of cluster $C_i$ is the maximal subset $C_i^o \subset C_i$ such that $d(x,z) < d(x,y)$ for all $x \in C_i$, $z \in C_i^o$, and $y \not\in C_i$.
\end{definition}
In a nice clustering, the core of any cluster is the entire cluster. We now require only that each core contain a significant fraction of points, and we show that the following simple sampling routine will find a refinement of the target clustering, even if the points are ordered adversarially.

\begin{alg} Algorithm subsample.
\begin{tabbing}
Set $T$ to the first $\ell$ elements\\
For $t = \ell+1, \ell+2, \ldots$: \\
\ \ \ \= Get a new point $x_t$ \\
	  \> With \= probability $\ell/t$: \\
	  \>\> Remove an element from $T$ uniformly at random and add $x_t$ to $T$
\end{tabbing}
\label{alg:subsample}
\end{alg}

It is well-known (see, for instance, \cite{knuth}) that at any time $t$, the set $T$ consists of $\ell$ elements chosen at random without replacement from $\{x_1, \ldots, x_t\}$.

\begin{thm}
Consider any clustering $\mathcal{C} = \{C_1, \ldots, C_k\}$ of $(\X,d)$, with core $\{C_1^o, \ldots, C_k^o\}$. Let $\beta = \min_i |C_i^o|/|\X|$. Fix any $\ell \geq k$. Then, given any ordering of $\X$, Algorithm~\ref{alg:subsample} detects a refinement of $\C$ with probability $1-ke^{- \beta \ell}$.
\label{thm:subsample}
\end{thm}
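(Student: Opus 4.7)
The plan is to separate the argument into two pieces: a deterministic structural claim about cores, and a probabilistic claim about the reservoir sample $T$.

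The deterministic claim is that if $T$ contains at least one representative from every core $C_i^o$, then the Voronoi partition induced by $T$ is a refinement of $\mathcal{C}$. To verify this, I take an arbitrary $x \in C_i$, pick any $z \in T \cap C_i^o$, and appeal directly to the defining property of the core: for every $y \notin C_i$, and in particular for every $y \in T \setminus C_i$, $d(x,z) < d(x,y)$. Consequently, the closest center in $T$ to $x$ lies in $C_i$, so every Voronoi cell induced by $T$ is contained in a single $C_i$. That is exactly the refinement condition.

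The probabilistic claim is that $T$ hits each core with high probability. Since the reservoir sampling procedure is well-known (and cited in the excerpt from Knuth) to yield a uniform random sample of size $\ell$ drawn without replacement from $\{x_1, \ldots, x_t\}$, I can apply this directly at the final time step, so $T$ is a uniform $\ell$-subset of $\X$ without replacement. For each fixed $i$,
\[
\Pr[T \cap C_i^o = \emptyset] \;=\; \prod_{j=0}^{\ell-1} \frac{|\X| - |C_i^o| - j}{|\X| - j} \;\leq\; \left(1 - \frac{|C_i^o|}{|\X|}\right)^{\ell} \;\leq\; (1-\beta)^\ell \;\leq\; e^{-\beta \ell},
\]
where the first inequality uses that sampling without replacement only decreases the miss probability compared with sampling with replacement (each factor, for $j \geq 0$, is at most $1 - |C_i^o|/|\X|$). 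A union bound over the $k$ clusters yields that some core is missed with probability at most $k e^{-\beta \ell}$.

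Combining the two pieces completes the proof. I do not anticipate any serious obstacle here: the deterministic step is essentially a one-line unpacking of the definition of core, and the probabilistic step is a textbook union-bound calculation on a uniform random subset. The only minor subtlety to be careful about is justifying that the without-replacement miss probability is dominated by $(1-\beta)^\ell$, which follows from a term-by-term comparison of the factors, and verifying that this bound is applicable at the final step under an arbitrary (possibly adversarial) ordering of $\X$ — but since reservoir sampling produces a uniform random subset regardless of the input ordering, this causes no difficulty.
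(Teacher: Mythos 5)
Your proposal is correct and follows what is essentially the paper's argument: use the fact that reservoir sampling yields a uniform $\ell$-subset without replacement (regardless of ordering), union-bound the probability that some core $C_i^o$ is missed by $k(1-\beta)^\ell \le ke^{-\beta\ell}$, and observe that a set of centers hitting every core induces a refinement by the defining inequality of the core. The term-by-term comparison justifying the without-replacement bound and the Voronoi-cell argument are both sound, so there is nothing to add.
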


\newpage
\bibliographystyle{plain}
\bibliography{clustering}

\begin{thebibliography}{10}

\bibitem{ackerman2009clusterability}
M.~Ackerman and S.~Ben-David.
\newblock Clusterability: A theoretical study.
\newblock {\em Proceedings of AISTATS-09, JMLR: W\&CP}, 5(1-8):53, 2009.

\bibitem{ackerman2011weighted}
M.~Ackerman, S.~Ben-David, S.~Branzei, and D.~Loker.
\newblock Weighted clustering.
\newblock {\em Proc. 26th AAAI Conference on Artificial Intelligence}, 2012.

\bibitem{COLT2010}
M.~Ackerman, S.~Ben-David, and D.~Loker.
\newblock Characterization of linkage-based clustering.
\newblock COLT, 2010.

\bibitem{NIPS2010}
M.~Ackerman, S.~Ben-David, and D.~Loker.
\newblock Towards property-based classification of clustering paradigms.
\newblock NIPS, 2010.

\bibitem{oligarchies}
M.~Ackerman, S.~Ben-David, D.~Loker, and S.~Sabato.
\newblock Clustering oligarchies.
\newblock {\em Proceedings of AISTATS-09, JMLR: W\&CP}, 31(66–74), 2013.

\bibitem{balcan2010robust}
M.-F. Balcan and P.~Gupta.
\newblock Robust hierarchical clustering.
\newblock In {\em COLT}, pages 282--294, 2010.

\bibitem{BBV}
M.F. Balcan, A.~Blum, and S.~Vempala.
\newblock A discriminative framework for clustering via similarity functions.
\newblock In {\em Proceedings of the 40th annual ACM symposium on Theory of
  Computing}, pages 671--680. ACM, 2008.

\bibitem{Epter}
S.~Epter, M.~Krishnamoorthy, and M.~Zaki.
\newblock Clusterability detection and initial seed selection in large
  datasets.
\newblock In {\em The International Conference on Knowledge Discovery in
  Databases}, volume~7, 1999.

\bibitem{H81}
J.A. Hartigan.
\newblock Consistency of single linkage for high-density clusters.
\newblock {\em Journal of the American Statistical Association},
  76(374):388--394, 1981.

\bibitem{Jardine}
N.~Jardine and R.~Sibson.
\newblock Mathematical taxonomy.
\newblock {\em London}, 1971.

\bibitem{Kleinberg}
J.~Kleinberg.
\newblock An impossibility theorem for clustering.
\newblock {\em Proceedings of International Conferences on Advances in Neural
  Information Processing Systems}, pages 463--470, 2003.

\bibitem{knuth}
D.E. Knuth.
\newblock {\em The Art of Computer Programming: Seminumerical Algorithms},
  volume~2.
\newblock 1981.

\bibitem{K01}
T.~Kohonen.
\newblock {\em Self-organizing maps}.
\newblock Springer, 2001.

\bibitem{L82}
S.P. Lloyd.
\newblock Least squares quantization in {PCM}.
\newblock {\em IEEE Transactions on Information Theory}, 28(2):129--137, 1982.

\bibitem{MacQ67}
J.B. MacQueen.
\newblock Some methods for classification and analysis of multivariate
  observations.
\newblock In {\em Proceedings of Fifth Berkeley Symposium on Mathematical
  Statistics and Probability}, volume~1, pages 281--297. University of
  California Press, 1967.

\bibitem{W63}
J.H. Ward.
\newblock Hierarchical grouping to optimize an objective function.
\newblock {\em Journal of the American Statistical Association}, 58:236--244,
  1963.

\bibitem{zadeh2009uniqueness}
R.B. Zadeh and S.~Ben-David.
\newblock A uniqueness theorem for clustering.
\newblock In {\em Proceedings of the Twenty-Fifth Conference on Uncertainty in
  Artificial Intelligence}, pages 639--646. AUAI Press, 2009.

\end{thebibliography}

\end{document}